\newenvironment{proof}{{\it Proof:}}{$\triangleleft$ }
\newtheorem{remark}{Remark}[section]
\newtheorem{definition}{Definition}[section]
\newtheorem{proposition}{Proposition}[section]
\newtheorem{problem}{Problem}[section]
\newtheorem{example}{Example}
\newenvironment{example*}
  {\par\noindent\textit{Example.}\ \itshape}
  {\par}
\begin{document}
\begin{frontmatter}

\title{On the Stabilization of Rigid Formations on Regular Curves} 

\author[First]{Mohamed Elobaid} 
\author[First]{Shinkyu Park}
\author[First]{Eric Feron} 

\address[First]{Computer, Electrical and Mathematical Sciences and Engineering \emph{CEMSE}, King Abdullah University of Science and Technology \emph{(KAUST)}; Thuwal 23955-6900, Saudi Arabia (e-mail: \{mohamed.elobaid,shinkyu.park,eric.feron\}@kaust.edu.sa).} 

\begin{abstract}                
This work deals with the problem of stabilizing a multi-agent rigid formation on a general class of planar curves. Namely, we seek to stabilize an equilateral polygonal formation on closed planar differentiable curves after a path sweep. The task of finding an inscribed regular polygon centered at the point of interest is solved via a randomized multi-start Newton-Like algorithm for which one is able to ascertain the existence of a minimizer. Then we design a continuous feedback law that guarantees convergence to, and sufficient sweeping of the curve, followed by convergence to the desired formation vertices while ensuring inter-agent avoidance. The proposed approach is validated through numerical simulations for different classes of curves and different rigid formations. Code: \url{https://github.com/mebbaid/paper-elobaid-ifacwc-2026} 

\end{abstract}

\begin{keyword}
 Multi-agent systems, Application of nonlinear analysis and design, Cooperative nonlinear control, 
\end{keyword}

\end{frontmatter}


\section{Introduction}
Coordinating multiple autonomous agents to perform complex spatial tasks—such as search and rescue, environmental monitoring, or cooperative inspection—poses several challenges. In such scenarios, agents are required to execute persistent coverage patterns over large areas following prescribed curves (e.g., Lissajous or Spirographs \cite{lissajous}). Once a region of interest is identified, the control objective transitions from wide coverage to focused inspection, where agents must converge to a rigid formation centered on the detected target. A regular polygonal formation is particularly desirable in this context, as it offers wide coverage and communication symmetry for a given number of agents.

Achieving this mission profile requires a control architecture capable of unifying distinct objectives: $(i)$ a path-following objective that guarantees convergence to a prescribed, possibly non-simple, geometric curve; and $(ii)$ a formation-stabilization component that achieves convergence to a rigid configuration \textit{on} that curve. Both objectives are required while ensuring inter-agent safety.

Existing frameworks typically treat these problems separately. Concerning $(i)$, several path-following approaches are available in the literature, such as energy-shaping techniques \cite{Ortega}, sliding-mode controllers \cite{Ozguner}, and transverse feedback linearization (TFL) \cite{Hauser, Altafini, Manf1, ElobaidLCSS}. However, standard path following via TFL fail on the very class of non-simple curves (i.e., those with cusps or self-intersections) often used for coverage \cite{pathfollowing}.
Regarding $(ii)$, formation coordination have been addressed through hierarchical designs \cite{doosthoseini2015coordinated, CoordinatedUAVs}, where agents are driven first to desired paths and subsequently synchronized. These works, however, typically focus on coordinating the dynamic motion of the agents along the paths, rather than our objective of a smooth transition to a static terminal formation given decoupling matrix singularities. Other recent works on formation control \cite{mattia} have explored formation control, but not in this specific multi-objective hierarchical context.

In this work, we propose a unified control architecture that solves this multi-objective problem. First, to address $(i)$ for non-simple curves, we introduce a \emph{lifted representation} of the curve, borrowing from \cite{lifting-vectorfield}. By augmenting each agent's state with a virtual lifted coordinate $z_i$, the path following problem is treated with respect to a higher-dimensional submanifold which is, by construction, simple and free of intersections. On this manifold, a TFL controller is derived to stabilize the transverse coordinates and guarantee convergence to the path.
Concerning $(ii)$, we cast the geometric problem of finding the terminal formation as a nonlinear least-squares program. We propose a randomized multi-start Newton-like solver that ensures the existence of a minimizer for broad classes of closed $\mathbf{C}^1$ curves, providing the desired vertex locations.

Finally, to unify these components, we design a continuous, state-dependent blending map. This map guarantees a smooth transition from the TFL path-following controller to a local pose stabilization controller, achieving convergence to the assigned vertices without switching-induced discontinuities. A distributed, asymmetric collision avoidance scheme is incorporated to preserve inter-agent safety throughout all phases of motion. 
The paper is organized as follows: Section II details the admissible class of curves, defines rigid formations and formally state the problem. Section III presents the main results. Simulations are discussed in Section IV. Concluding remarks end the manuscript.


\section{Assumptions and Problem Statement}

\subsection{The Class of Admissible Curves}
We begin with a few definitions and comments;

\begin{definition}\label{def:path}
A parameterized curve is a mapping 
$\gamma : I \to \mathbb{R}^2, \, \,  s \mapsto \gamma(s),
$
where $I \subseteq \mathbb{R}$ is an interval (\cite{carmo}). 
$\gamma$ is said to be closed if for a period $L>0$
\[
\gamma(0) = \gamma(L), \quad \gamma'(0) = \gamma'(L), \quad \gamma''(0) = \gamma''(L), \ \dots
\]
Unless otherwise stated, $\gamma$ is at least continuously differentiable (of class $\mathbf{C}^1$) with respect to $s$. Its image,
\begin{align}\label{curve}
    \Gamma = \gamma(I) \subset \mathbb{R}^2,
\end{align}
is referred to as the \emph{geometric path}.
\end{definition}

\begin{definition}
 $\gamma(s)$ is said to be \emph{regular} at $s_\circ\in I$ if its tangent vector is well defined and nonzero $\gamma'(s_\circ) \not = \mathbf{0}$.  
A point $s_\circ \in I$ where $\gamma'(s_\circ) = \mathbf{0}$ or is undefined is called a \emph{singularity}.  
A common type of singularity is a \emph{cusp}, where the curve reverses its direction of travel in the plane. Cusps can occur on $\mathbf{C}^1$ curves.
\end{definition}

\begin{definition}
$\gamma(s)$ is an \emph{immersion} if it is regular everywhere on $I$.  
If, in addition, $\gamma$ is injective (i.e., $\gamma(s_1) \neq \gamma(s_2)$ for $s_1 \neq s_2$ in the interior of $I$), it is called an \emph{embedding}.  
The image of an embedding is a simple, non-self-intersecting curve in $\mathbb{R}^2$.
\end{definition}

Singularities, in our sense, may arise from two distinct mechanisms: 
\emph{(i)} a non-differentiable parametrization exhibiting a discontinuity in its tangent vector, or 
\emph{(ii)} a $\mathbf{C}^1$ parametrization whose tangent vector continuously vanishes at isolated points. 
Only parametrizations of the second type are \emph{admissible} in our setting. Note that it is possible, in some cases, to obtain a differentiable reparametrization by letting the velocity of the parameter vary along the curve. 
\begin{example}
Consider the simple example of a perfect square on the plane. In the standard arc-length parametrization, each edge is traversed at constant speed. 
        At the vertices, the tangent vector undergoes a jump (e.g., from $(1,0)$ to $(0,1)$), so that $\gamma'(s)$ is discontinuous.  Alternatively, a $\mathbf{C}^1$ parametrization can be constructed by enforcing a vanishing velocity at each corner through a cubic Hermite interpolation along each edge:
        \[
          t(s) = L\bigl(3(s/L)^2 - 2(s/L)^3\bigr), \qquad s \in [0,L].
        \]
        ensuring continuity of $\gamma'(s)$ and $\gamma'(s_\mathrm{corner}) = \mathbf{0}$ at the vertices. 
\end{example}        
We will treat any geometric path $\Gamma(I)$ that is the image of a $\mathbf{C}^1$ parametrization of a curve with isolated points of vanishing tangent. We explicitly do \emph{not} preclude curves with cusps nor self-intersections, provided they are differentiable. 
The machinery we will utilize, as the reader may have already inferred, is a \emph{lifting construction} which we make explicit in Section \ref{sec:controller-design}.

\begin{remark}
The restriction to closed curves is mainly motivated by the
formation–finding and sweeping tasks described in the introduction.
Nevertheless, the lifting and control constructions could be extended to open, bounded curves~$\gamma=[s_0,s_f]\mapsto\mathbb{R}^2$
provided some adjustments to the control law and rigid formation are carried.
\end{remark}

\subsection{The Class of Rigid Formations}

We now define what we mean by a rigid formation constrained to a geometric path.

\begin{definition}
Let $\gamma : I \to \mathbb{R}^2$ be an admissible curve and let $n \in \mathbb{N}$, $n \ge 3$, be the number of agents. A \emph{rigid formation} of $n$ agents is an ordered set of $n$ points, $\{p_0, p_1, \ldots, p_{n-1}\}$, located on the image of $\gamma$ that form the vertices of a \emph{regular polygon}.
\end{definition}

To find such a formation, we search for parameters $\theta = (\theta_0, \ldots, \theta_{n-1})^\top \in I^n$ whose corresponding points $\gamma(\theta_i)$ satisfy the geometric properties of a regular polygon, and whose center is as close as possible to a point of interest $\mathbf{c}_{\text{target}}$. For compactness, let $\gamma_i := \gamma(\theta_i)$ be the $i$-th vertex. We then define the \emph{edge vector} $\mathbf{e}_i$ as the vector pointing from vertex $i-1$ to vertex $i$: $\mathbf{e}_i(\theta) = \gamma_i - \gamma_{i-1}.
$
Using this shorthand notation, we have:

\begin{definition}
\label{def:rigid-formation-residuals}
The configuration $P(\theta^\star) = [\gamma_0, \ldots, \gamma_{n-1}]$ forms a regular polygon if the following residuals vanish for all $i$ (modulo $n$):
\begin{enumerate}
    \item \emph{Equal Side Lengths residual:}
    \begin{equation}\label{eq:residual-length}
    r_{L,i}(\theta^\star) = \|\mathbf{e}_{i+1}\|^2 - \|\mathbf{e}_i\|^2 = 0.
    \end{equation}

    \item \emph{Equal Interior Angles residual:} The dot product between consecutive edge vectors must be constant:
    \begin{equation}\label{eq:residual-angle}
    r_{A,i}(\theta^\star)
    = \mathbf{e}_{i+1}^\top \mathbf{e}_i
    - \mathbf{e}_{i+2}^\top \mathbf{e}_{i+1}
    = 0.
    \end{equation}
\end{enumerate}
\end{definition}

The task of finding such a configuration is posed as an optimization problem. We define the \emph{rigid formation cost functional} $J: I^n \to \mathbb{R}$ as the weighted sum:
\begin{equation}
J(\theta)
= \frac{1}{2} \sum_{i=0}^{n-1}
\left( w_L\, r_{L,i}(\theta)^2 + w_A\, r_{A,i}(\theta)^2 \right),
\label{eq:formation-cost}
\end{equation}
where $w_L > 0$ and $w_A > 0$ are weighting factors. A configuration $\theta^\star$ corresponds to a rigid formation if it is a global minimizer and satisfies $J(\theta^\star) = 0$.

\subsection{System Model and Problem Statement}
Consider a system of $n$ agents, each modeled by an \emph{extended unicycle system} obtained via dynamic extension of the standard kinematics, ensuring well-defined relative degrees with respect to position outputs~\cite[Ch.~4]{isidori}: 
\begin{equation}\label{eq:agent-model}
\Sigma_i: \quad
\begin{cases}
    \dot{\mathbf{p}}_i = v_i \begin{bmatrix} \cos\psi_i \\ \sin\psi_i \end{bmatrix} \\
    \dot{\psi}_i = \omega_i \\
    \dot{v}_i = a_i
\end{cases}
\end{equation}
where $\mathbf{p}_i = [x_i, y_i]^\top$ is the agent's position in the plane, $\psi_i$ its heading, and $v_i$ its speed. The controls are the longitudinal acceleration $a_i$ and the angular velocity $\omega_i$. Let $\mathbf{p}(t) = [\mathbf{p}_0(t)^\top, \ldots, \mathbf{p}_{n-1}(t)^\top]^\top$ be the vector of all agent positions. The problem we are interested in asks the following;

\begin{problem}
\label{problem}
Given an admissible curve $\gamma(s)$ and a desired rigid formation defined by target vertices $P^\star = \{p_0^\star, \ldots, p_{n-1}^\star\} \subset \Gamma$, find, if possible, a smooth feedback law $\mathbf{u}_i = [a_i, \omega_i]^\top = f_i(\mathbf{p}_i, \psi_i, v_i, \{\mathbf{p}_j\}_{j\neq i})$ for each agent $i$ such that for a set of initial conditions $\mathcal{X} \supset \Gamma$, the closed-loop multi-agent system satisfies:
\begin{enumerate}
    \item \emph{Path following:} The geometric path $\Gamma$ is an attractive and invariant set for each agent (up to collision-avoidance). That is, the distance of each agent to the curve, $\|\mathbf{p}_i(t)\|_\Gamma = \inf_{\mathbf{q} \in \Gamma} \|\mathbf{p}_i(t) - \mathbf{q}\|$, converges to zero:
    \[
    \lim_{t \to \infty} \|\mathbf{p}_i(t)\|_\Gamma = 0, \quad \forall i \in \{0, \ldots, n-1\}.
    \]
    
    \item \emph{Rigid Formation Stabilization:} The positions of the agents converge to the set of target vertices, after satisfying the sweeping requirement:
    \[
    \lim_{t \to \infty} (\mathbf{p}_i(t) - p_i^\star) = \mathbf{0}, \quad \forall i \in \{0, \ldots, n-1\}.
    \]

    \item \emph{Inter-Agent Collision Avoidance:} The agents maintain a minimum separation $d_{\text{safe}} > 0$ for all time:
    \[
    \|\mathbf{p}_i(t) - \mathbf{p}_j(t)\| \ge d_{\text{safe}}, \quad \forall t \ge 0, \quad \forall i \neq j.
    \]
\end{enumerate}
\end{problem}


\section{Main Results}
\subsection{The Rigid Formation Finder}
Consider the minimizer
\[
\theta^\star = \arg\min_{\theta \in [0,2\pi)^n} J(\theta),
\]
where $J(\theta)$ is as in~\eqref{eq:formation-cost}. Since each residual term only depends on a few neighboring parameters, the Jacobian is sparse. Explicitly, from \eqref{eq:residual-length}-\eqref{eq:residual-angle} one has;
\begin{equation}\label{eq:jacobian-length-correct}
\frac{\partial r_{L,i}}{\partial \theta_j} =
\begin{cases}
2\,\mathbf e_i^\top \gamma'_{i-1} & \text{if } j = i-1, \\
-2(\mathbf e_{i+1} + \mathbf e_i)^\top \gamma'_i & \text{if } j = i, \\
2\,\mathbf e_{i+1}^\top \gamma'_{i+1} & \text{if } j = i+1, \\
0 & \text{otherwise},
\end{cases}
\end{equation}
Likewise,
\begin{equation}\label{eq:jacobian-angle-correct}
\frac{\partial r_{A,i}}{\partial \theta_j} =
\begin{cases}
-\mathbf e_{i+1}^\top \gamma'_{i-1} & \text{if } j = i-1, \\
(\mathbf e_{i+1} - \mathbf e_{i} + \mathbf e_{i+2})^\top \gamma'_i & \text{if } j = i, \\
(\mathbf e_i + \mathbf e_{i+1} - \mathbf e_{i+2})^\top \gamma'_{i+1} & \text{if } j = i+1, \\
-\,\mathbf e_{i+1}^\top \gamma'_{i+2} & \text{if } j = i+2, \\
0 & \text{otherwise}.
\end{cases}
\end{equation}
These expressions can be computed efficiently. The gradient is then $\nabla J(\theta) = \nabla\mathbf{r}(\theta)^\top \mathbf{W} \mathbf{r}(\theta)$, where $\nabla\mathbf{r}(\theta)$ is the full Jacobian matrix assembled from the partial derivatives above and $\mathbf{W}$ is a diagonal matrix of weights.

\begin{proposition}
For the admissible curve $\gamma\in C^1$ the rigid formation cost $J$ attains at least one global minimizer on the compact set $[0,2\pi]^n$. Moreover, if a configuration $\theta^\star$ satisfies $\mathbf r(\theta^\star)=0$ and the Jacobian $D\mathbf r(\theta^\star)$ has full column rank, then $\theta^\star$ corresponds to an isolated rigid formation.
\end{proposition}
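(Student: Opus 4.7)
The proposition contains two distinct claims --- existence of a global minimizer and local isolation under the stated nondegeneracy condition --- which I would address in that order.

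\emph{Existence.} The plan is a direct Weierstrass argument. The box $[0,2\pi]^n$ is compact. Because $\gamma\in C^1$, each residual $r_{L,i}$ and $r_{A,i}$ in Definition~\ref{def:rigid-formation-residuals} is a continuous (in fact $C^1$) function of $\theta$, being a polynomial expression in the $\gamma(\theta_j)$'s, and so is $J$ in \eqref{eq:formation-cost}. The extreme value theorem then guarantees that $J$ attains its infimum on $[0,2\pi]^n$, producing the claimed global minimizer. The only subtlety worth flagging is that, since $\gamma$ is closed with period (normalized to) $2\pi$, the boundary of the hypercube is identified with interior points under the natural quotient, so no artificial boundary pathology can arise.

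\emph{Isolation.} Given $\mathbf{r}(\theta^\star)=0$ with $D\mathbf{r}(\theta^\star)$ of full column rank, I would argue by contradiction using a first-order expansion. Suppose $\theta^\star$ is not isolated; take a sequence $\theta_k\to\theta^\star$, $\theta_k\neq\theta^\star$, with $\mathbf{r}(\theta_k)=0$, and set $h_k = \theta_k-\theta^\star$. The $C^1$ expansion
\[
0 \;=\; \mathbf{r}(\theta^\star+h_k) \;=\; D\mathbf{r}(\theta^\star)\,h_k + o(\|h_k\|)
\]
divided by $\|h_k\|$, together with a subsequence along which $h_k/\|h_k\|\to u$ with $\|u\|=1$, yields $D\mathbf{r}(\theta^\star)u=0$, contradicting injectivity of $D\mathbf{r}(\theta^\star)$. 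Equivalently, viewing $\mathbf{r}:\mathbb{R}^n\to\mathbb{R}^{2n}$ as a $C^1$ map with injective derivative at $\theta^\star$, the immersion form of the inverse function theorem gives local injectivity of $\mathbf{r}$, so the preimage of $0$ in a neighborhood of $\theta^\star$ reduces to $\{\theta^\star\}$. This is precisely the claimed isolation of the rigid formation.

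\emph{Main obstacle.} The conceptual content is light; the real obstacle is matching the regularity of the hypothesis. The slickest route to isolation is through the Gauss--Newton Hessian: since $J = \tfrac12\mathbf{r}^\top\mathbf{W}\mathbf{r}$ with $\mathbf{W}=\mathrm{diag}(w_L I_n, w_A I_n)\succ 0$, at any zero of $\mathbf{r}$ one has $\nabla^2 J(\theta^\star) = D\mathbf{r}(\theta^\star)^\top \mathbf{W}\, D\mathbf{r}(\theta^\star)\succ 0$, making $\theta^\star$ a strict local minimum and therefore isolated. However, this step needs $C^2$ residuals, i.e.\ $\gamma\in C^2$, which is strictly stronger than the standing $C^1$ assumption. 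I would therefore commit to the first-order/immersion argument above in the final write-up and include the Hessian view only as a remark for the $C^2$ case.
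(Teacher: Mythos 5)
Your proposal is correct and takes essentially the same route as the paper: existence follows from continuity of $J$ on the compact box $[0,2\pi]^n$ via the extreme value theorem, and isolation follows from full column rank of $D\mathbf r(\theta^\star)$ forcing local injectivity of $\mathbf r$, so that $\theta^\star$ is an isolated zero (hence an isolated rigid formation, since $J\ge 0$ and $J(\theta^\star)=0$). The only cosmetic difference is that you establish local injectivity by a direct first-order sequence/contradiction argument (and rightly defer the Gauss--Newton Hessian view to the $C^2$ case), whereas the paper invokes the implicit function theorem for the same step.
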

\begin{proof}
Each $r_i(\theta)$ is continuous by admissibility of $\gamma(s)$, hence the cost 
$J(\theta)=\tfrac12\,\mathbf r(\theta)^\top \mathbf W \mathbf r(\theta)$ is continuous on $[0,2\pi]^n$.
Because this domain is compact (serving as covering of the periodic $\theta$ space), the existence of a global minimizer follows from the Extreme Value Theorem \cite{calculus}. Now suppose $\theta^\star$ satisfies $\mathbf r(\theta^\star)=0$ and that the Jacobian 
$D\mathbf r(\theta^\star)$ has full column rank.  
Then $\mathbf r$ is locally injective around $\theta^\star$, and by the Implicit Function Theorem, there exists an open neighborhood 
$U\subset[0,2\pi]^n$ of $\theta^\star$ such that 
$\mathbf r(\theta)=0$ implies $\theta=\theta^\star$ for all $\theta\in U$.  
Hence $\theta^\star$ is an isolated zero of $\mathbf r$.
Since $J(\theta^\star)= 0$, and is nonnegative everywhere, $\theta^\star$ is a (global) minimizer of $J$ in $U$, corresponding to an isolated rigid formation.
\end{proof}

\subsubsection{Global Minimizer via a Multi-Start Gauss–Newton Strategy:}
\label{subsec:multistart}

The cost function~\eqref{eq:formation-cost} is continuously differentiable but generally nonconvex, 
especially for curves with nonuniform curvature or lacking geometric symmetry. 
Consequently, different initializations of the Gauss–Newton iteration may converge to distinct stationary points corresponding to locally optimal agent spacings along~$\gamma$. 
To enhance robustness and improve exploration of the search space, we employ a \emph{multi-start Gauss–Newton strategy}. 
At iteration $k$, the update is the standard step;
\begin{equation}
\theta^{k+1}
= \theta^k
- \eta_k \big(D\mathbf r(\theta^k)^\top \mathbf W D\mathbf r(\theta^k)\big)^{-1}
  D\mathbf r(\theta^k)^\top \mathbf W\mathbf r(\theta^k),
\label{eq:GNupdate}
\end{equation}
where $\eta_k \in (0,1]$ is a step size selected via an Armijo-type line search~\cite{armijo}.
The Jacobian $D\mathbf r(\theta^k)$ inherits a block-tridiagonal sparsity structure 
from~\eqref{eq:residual-length}–\eqref{eq:residual-angle}, allowing efficient evaluation of~\eqref{eq:GNupdate}.  

Since $J \in \mathbf{C}^1$ on the compact domain $[0,2\pi]^n$, and $D\mathbf r(\theta)$ is locally Lipschitz,  
every sequence $\{\theta^k\}$ generated by~\eqref{eq:GNupdate} admits accumulation points that are stationary.
Global optimality, however, requires additional structural assumptions on~$\gamma$. For highly symmetric curves (e.g., Cassini ovals, rose curves, or deltoids ...etc), intuition suggests that 
uniformly spaced initial parameters
\[
\theta^{(0)}_i = \tfrac{2\pi i}{n}, \quad i=0,\ldots,n-1,
\]
often suffice: the symmetry of $\gamma$ leads to a global minimizer with equally spaced points along the curve.

In contrast, when $\gamma$ possesses no nontrivial symmetries and the parameter speed  
$s \mapsto \|\gamma'(s)\|$ is nonconstant, 
equal spacing in $\theta$ often leads to clustering along the image $\Gamma = \operatorname{Im}(\gamma)$ creating a poor initial geometry that often leads to local minima.  
To mitigate this, Algorithm~\ref{alg:multistart} employs two complementary initialization strategies:

\begin{enumerate}
    \item \emph{Curvature-weighted:}  
    Choose $\theta_i^{(0)}$ satisfying
    \[
    \int_0^{\theta_i^{(0)}} \|\gamma'(s)\|\,ds = \frac{i}{n}L,
    \qquad L = \int_0^{2\pi} \|\gamma'(s)\|\,ds,
    \]
    thereby ensuring equal arclength spacing of the corresponding points $\gamma(\theta_i^{(0)})$ along the curve.

    \item \emph{Randomized:}  
    Draw $\theta_i^{(0)} \sim \mathrm{Unif}(0,2\pi)$,  
    order them to preserve agent sequence,  
    and perform multiple independent restarts.
\end{enumerate}

For curves with pronounced curvature variations, the curvature-weighted rule provides a geometry-aware initialization that accelerates convergence.  
Randomized sampling, in turn, promotes exploration of multiple attraction basins of~$J$, increasing the probability of capturing the global minimizer as $N_\mathrm{init}\!\to\!\infty$.  
Since $J$ is continuous on a compact domain, 
the best configuration obtained among a finite number of restarts approximates the global minimum arbitrarily closely as the initialization density increases.

\begin{figure}[h]
\centering
\fbox{
\begin{minipage}{0.95\linewidth}

\textbf{Algorithm: Rigid Formation Finder (Multi-Start Gauss--Newton)}

\smallskip
\textbf{Input:} Curve $\gamma(s)$, agents $n$, initializations $N_{\mathrm{init}}$, optional target center $\mathbf c_{\mathrm{target}}$  
\textbf{Output:} Selected rigid formation parameters $\theta^\star$

\begin{enumerate}[leftmargin=*]

\item Initialize  
\[
\mathcal{S} \gets \emptyset.
\]

\item For each initialization $m = 1,\dots, N_{\mathrm{init}}$:
    \begin{enumerate}
        \item Initialize  
        \[
        \theta_m^{(0)} \ \text{(curvature-weighted or random)}.
        \]

        \item Perform Gauss--Newton iterations ($k = 0,\dots,K_{\max}$):
        \begin{enumerate}
            \item Compute residuals and Jacobian:
            \[
            \mathbf r(\theta_m^{(k)}), \quad
            D\mathbf r(\theta_m^{(k)}).
            \]

            \item Update parameters:
            \begin{align*}
            \Delta\theta &= -\big(D\mathbf r^\top \mathbf W D\mathbf r\big)^{-1}
                            D\mathbf r^\top \mathbf W\,\mathbf r, \\
            \theta_m^{(k+1)} &= \theta_m^{(k)} + \eta_k \Delta\theta .
            \end{align*}

            \item Stop if $\|\Delta\theta\| < \varepsilon$.
        \end{enumerate}

        \item If $J(\theta_m^{(k)}) < \varepsilon_J$ and $\theta_m^{(k)}$ is feasible:
        \begin{enumerate}
            \item Compute center and average side length:
            \begin{align*}
            \mathbf c_m &= \frac{1}{n}\sum_{i=0}^{n-1}
                           \gamma(\theta_{m,i}), \\
            \ell_m &= \frac{1}{n}\sum_{i=0}^{n-1}
                     \|\gamma(\theta_{m,i+1}) - \gamma(\theta_{m,i})\|.
            \end{align*}

            \item Add $(\theta_m^{(k)}, \mathbf c_m, \ell_m)$ to $\mathcal{S}$.
        \end{enumerate}

    \end{enumerate}

\item Selection step:

If $\mathbf c_{\mathrm{target}}$ is given,
\[
\theta^\star =
\arg\min_{(\theta,\mathbf c,\ell)\in\mathcal{S}}
\|\mathbf c - \mathbf c_{\mathrm{target}}\|.
\]

Otherwise choose the formation with maximal average side length:
\[
\theta^\star =
\arg\max_{(\theta,\mathbf c,\ell)\in\mathcal{S}} \ell.
\]

\end{enumerate}

\end{minipage}
}
\caption{Rigid formation search using a multi-start Gauss--Newton method.}
\label{alg:multistart}
\end{figure}

Algorithm~\ref{alg:multistart} thus selects the lowest-cost stationary configuration among all runs.  
Step sizes $\eta_k$ are adapted to guarantee monotone decrease of $J$, 
and convergence is declared when both $\|\Delta\theta^k\|$ and the relative cost variation fall below prescribed tolerances.  
Thanks to the sparsity of $D\mathbf r(\theta)$, the computational cost scales linearly with $n$, rendering the scheme suitable for real-time formation finding. By default, Algorithm \ref{alg:multistart} selects the solution with the largest side length, avoiding degenerate ones whenever possible. 
\begin{example}\label{ex:deltoid}
Consider the \emph{deltoid} curve
\begin{equation}
\gamma_\mathrm{d}(s) = 
\begin{bmatrix}
2\cos s + \cos 2s \\
2\sin s - \sin 2s
\end{bmatrix},
\quad s \in [0,2\pi],
\label{eq:deltoid}
\end{equation}
which is a smooth three-cusped hypocycloid, symmetric under $2\pi/3$ rotations. 
We employ Algorithm \ref{alg:multistart}
for different numbers of agents $n \in \{3, 4, 5\}$ on $\gamma_\mathrm{d}$.  
The focused inspection point is chosen as $c_{\text{target}} = (0,\ 0)$. 

\begin{figure}[h!]
    \centering
    \includegraphics[width=0.8\linewidth]{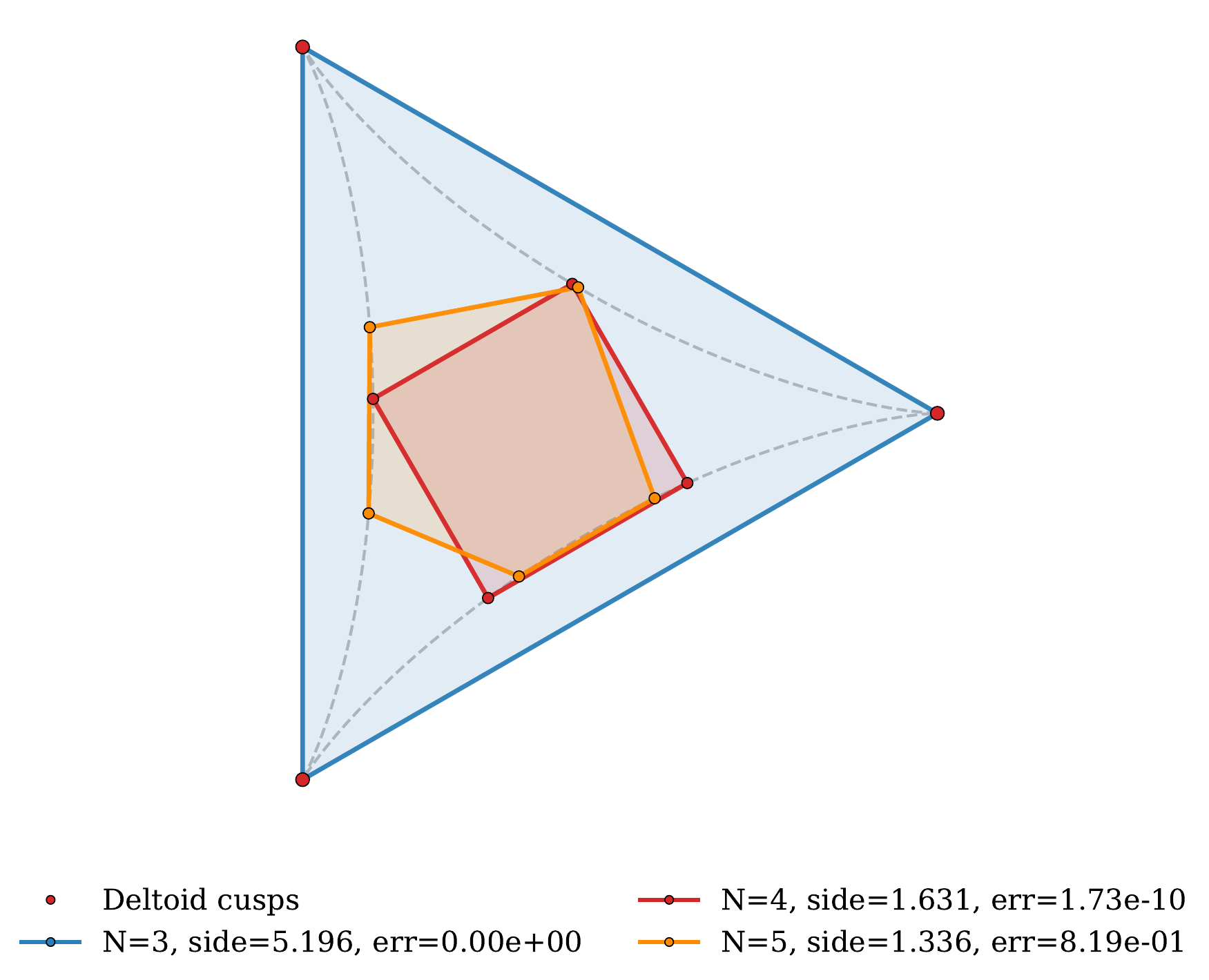}
    \caption{Triangular (blue), square (orange), and pentagonal (green) rigid formations on the deltoid 
    .}
    \label{fig:deltoid-formation}
\end{figure}

\noindent For a triangular rigid formation $n = 3$, starting with a randomized uniformly spaced initialization,
the algorithm converges in a few iterations to the nearest cusps,
yielding an inscribed equilateral triangle.
Due to the threefold rotational symmetry of~\eqref{eq:deltoid}, there are exactly three equivalent minima corresponding to cyclic rotations of $\theta^\star_{3}$.
When $n=4$, the algorithm finds a perfect square configuration (up to numerical thresholds) with a residual $\|r(\theta^\star_{4})\| \approx 1.7\times 10^{-10}$.  
For $n=5$ (pentagon rigid formation), the optimization converges, in few steps to a best-fit configuration with a residual $\|r(\theta^\star_{5})\| \approx 0.8$
\end{example}

The above example highlights how curve symmetry affects the convergence of the algorithm to perfect rigid formations.

\begin{figure*}[t!]
    \centering
    \includegraphics[width=\linewidth]{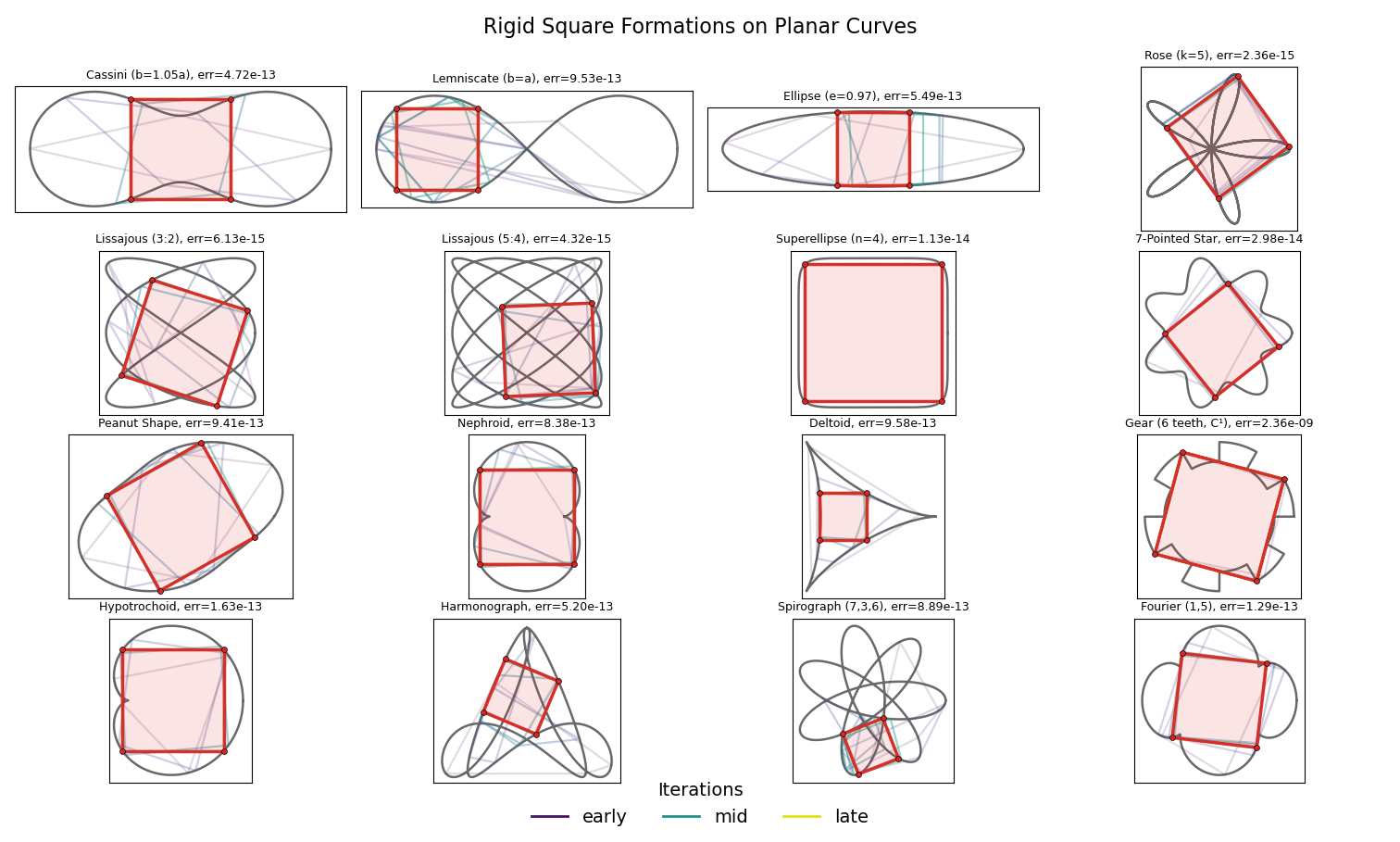}
    \caption{Rigid square formations found on various planar curves. The shaded red quadrilateral indicates the final optimized square, while the softer colored outlines represent intermediate iterations.}
    \label{fig:squares_on_curves}
\end{figure*}

\begin{example}[Inscribed Squares]\label{square-finding}
We illustrate the robustness of Algorithm \ref{alg:multistart} by applying it to a diverse family of planar curves on the famous conjecture due to~\cite{squares}.
Our objective functional (\ref{eq:formation-cost}) is augmented with the following regularizer;
\[
\begin{aligned}
&\|\gamma(\theta_{1})-\gamma(\theta_{3})\| - \sqrt{2}\,\bar{\ell},\quad
 \|\gamma(\theta_{2})-\gamma(\theta_{4})\| - \sqrt{2}\,\bar{\ell},
\end{aligned}
\]
with cyclic indexing and $\bar{\ell}$ the mean side length. We do not provide a target inspection point.
sixteen curves spanning five classes: 
\emph{(i) Smooth and convex} (Ellipse, Superellipse); 
\emph{(ii) Analytic nonconvex} (Cassini ovals, Lemniscates); 
\emph{(iii) Harmonic compositions} (Lissajous, Rose curves, Fourier sums, \enquote{Peanuts}); 
\emph{(iv) Cusped/self-intersecting} (Deltoid, Nephroid, Spirographs); and 
\emph{(v) Piecewise $C^1$} (Gear-like curves with Hermite-interpolation transitions).
Specific parameters are detailed in the provided code. In all instances, the algorithm converges to tight tolerances (e.g., $\|\mathbf{r}\|\le 10^{-9}$).  
Figure~\ref{fig:squares_on_curves} overlays the iterations on each curve; color progression (blue–green–yellow, viridis) encodes early to late iterations, and the final found solution is highlighted.
\end{example}

\subsection{Controller Design}\label{sec:controller-design}
We now turn to the solution to Problem \ref{problem}, namely the design of a smooth feedback law for multiple unicycle-type agents, treating an \textit{augmented} state-space representation and a lifted path manifold.
Each agent must track a reference curve $\Gamma=\gamma([0,2\pi])$, avoid collisions with other agents, and ultimately stabilize at an assigned vertex $\gamma_i^\star$ of a desired rigid formation found by Algorithm \ref{alg:multistart}, given mission objectives. 
The design integrates three ingredients:
\begin{enumerate}[label=(\roman*)]
 \item \emph{Lifted geometric representation} removing ambiguity at the singularities of $\gamma(s)$,
 \item \emph{TFL controller} for the lifted path following manifold, 
 \item \emph{Continuous blending maps} that mixes TFL, pose stabilization, and distributed collision avoidance feedback laws
\end{enumerate}

All components are differentiable in the state variables, resulting in continuity of the closed-loop vector field.


\subsubsection{Lifted Dynamics and TFL path following}

To extend TFL path-following to non-simple curves, we \emph{lift} the agent’s state by introducing a virtual coordinate $(z_i, v_{z,i})$ governed by simple integrators,
\begin{equation}\label{eq:lifted-virtual}
\dot{z}_i = v_{z,i}, \qquad \dot{v}_{z,i} = a_{z,i},
\end{equation}
where $a_{z,i}$ is an auxiliary control input. The augmented state and input are
\[
\mathbf{x}_L = [x, y, \psi, v, z, v_z]^\top, \qquad 
\mathbf{u} = [a, \omega, a_z]^\top,
\]
yielding
\[
\dot{\mathbf{x}}_L = \mathbf{f}(\mathbf{x}_L) + 
\mathbf{g}_1(\mathbf{x}_L)a + 
\mathbf{g}_2(\mathbf{x}_L)\omega + 
\mathbf{g}_3(\mathbf{x}_L)a_z,
\]
with
\begin{align*}
\mathbf{f} &= [v\cos\psi,\, v\sin\psi,\, 0,\, 0,\, v_z,\, 0]^\top, \\
\mathbf{g}_1 &= [0,0,0,1,0,0]^\top, \\
\mathbf{g}_2 &= [0,0,1,0,0,0]^\top, \quad
\mathbf{g}_3 = [0,0,0,0,0,1]^\top.
\end{align*}
The virtual state $z_i$ defines a dynamic path parameter
\[
s_i(t) = \frac{z_i(t)}{K_L}, \qquad K_L > 0,
\]
which serves as a smooth address along the desired curve $\gamma(s_i)$. The original 2D curve $\Gamma$ is thus embedded \textit{unwrapped} in $\mathbb{R}^3$ as
\begin{equation}\label{eq:lifted-embedding}
\phi(s) =
\begin{bmatrix}
\gamma_x(s)\\[2pt]
\gamma_y(s)\\[2pt]
K_L s
\end{bmatrix}, \qquad K_L>0,
\end{equation}
which is by construction simple and free of self-intersections.

One is then able to apply TFL to stabilize the augmented agent dynamics on the manifold defined by \eqref{eq:lifted-embedding}. The transverse and tangential outputs are defined as ($\mathbf{n}(s_i), \mathbf{t}(s_i)$ are the Frenet nomal and tangent and $\psi_t$ is the tangent orientation angle)
\begin{align}
\mathbf{y}_{\perp} &=
\begin{bmatrix}
h_1 \\ h_2
\end{bmatrix}
=
\begin{bmatrix}
\mathbf{n}(s_i)^\top(\mathbf{p}_i-\gamma(s_i))\\
\mathbf{t}(s_i)^\top(\mathbf{p}_i-\gamma(s_i))
\end{bmatrix}
=\begin{bmatrix}e_n\\ e_t\end{bmatrix}, \label{eq:output-perp}\\[3pt]
y_{\parallel} &= h_3 = z_i - z_i^\star. \label{eq:output-parallel}
\end{align}
Since $s_i$ evolves from $z_i$ rather than through orthogonal projection, $e_{t,i}$ remains a valid dynamic variable. 
The second derivatives yield (with the standard notation for the Lie derivative $L_{\mathbf{f}}^2 h = L_{\mathbf{f}}(L_{\mathbf{f}} h), \ L_{\mathbf{f}} h(q) = \frac{\partial h}{\partial q} \mathbf{f}(q)$)
\[
\ddot{\mathbf{y}} = L_{\mathbf{f}}^2\mathbf{h} + \mathbf{D}(\mathbf{x}_L)\mathbf{u},
\]
with (denoting $\kappa$ for the curvature of $\gamma$)
\[
\mathbf{D}(\mathbf{x}_L)=
\begin{bmatrix}
\sin(\psi-\psi_t) & v\cos(\psi-\psi_t) & -\frac{\kappa\|\gamma'\|e_t}{K_L}\\
\cos(\psi-\psi_t) & -v\sin(\psi-\psi_t) & \frac{\kappa\|\gamma'\|e_n-\|\gamma'\|}{K_L}\\
0 & 0 & 1
\end{bmatrix},
\]
whose determinant $\det(\mathbf{D})=-v$. Hence, the system has vector relative degree $\mathbf{r}=(2,2,2)$ and total relative degree $r=n=6$, i.e., it is fully feedback linearizable.

The path-following manifold
\[
\mathcal{Z} = \{\mathbf{x}_L\mid \mathbf{y}_{\perp}=\dot{\mathbf{y}}_{\perp}=\mathbf{0}\}
\]
is a 2D invariant submanifold of $\mathbb{R}^6$. Define the coordinates
\begin{align*}
\mathbf{\xi}_{\perp} &= [h_1, L_{\mathbf{f}}h_1, h_2, L_{\mathbf{f}}h_2]^\top = [e_n,\dot{e}_n,e_t,\dot{e}_t]^\top,\\
\mathbf{\xi}_{\parallel} &= [h_3, L_{\mathbf{f}}h_3]^\top = [z-z^\star, v_z-\dot{z}^\star]^\top,
\end{align*}
the linearized dynamics read
\[
\ddot{\mathbf{y}}_{\perp} = \mathbf{v}_{\perp}, \qquad \ddot{y}_{\parallel} = v_{\parallel},
\]
where $\mathbf{v} = [\mathbf{v}_{\perp}^\top, v_{\parallel}]^\top$ is a virtual input defined by
\begin{align}\label{eq:tfl-feedback}
\mathbf{u}_{\text{TFL}} = \mathbf{D}^{-1}(\mathbf{x}_L)\big(\mathbf{v} - L_{\mathbf{f}}^2\mathbf{h}(\mathbf{x}_L)\big).
\end{align}
A simple PD choice $\mathbf{v} = -K_p\mathbf{y} - K_d\dot{\mathbf{y}}$ ensures exponential convergence $\mathbf{y}\to0$, thereby enforcing attraction to $\mathcal{Z}$ and along-curve progression with $z^\star(t)$.

\begin{example}
Consider the 3-petals rose with the following parametrization
\begin{equation}
\gamma(s) =
\begin{bmatrix}
a \cos(ks) \cos(s) \\
a \cos(ks) \sin(s)
\end{bmatrix},
\quad a=1.8, \, k=3.
\label{eq:rose}
\end{equation}
For agent $i$, one needs to evaluate:
\begin{align*}
    \gamma'(s)
&= 
1.8
\begin{bmatrix}
-\sin(s)\cos(3s) - 3\sin(3s)\cos(s)\\[3pt]
\cos(s)\cos(3s) - 3\sin(3s)\sin(s)
\end{bmatrix}, \\[4pt]
\mathbf{t}(s) &= 
\frac{\gamma'(s)}{\|\gamma'(s)\|}, \\[4pt]
\mathbf{n}(s) &= 
\frac{1}{\|\gamma'(s)\|}
\begin{bmatrix}
-\big(\cos(s)\cos(3s) - 3\sin(3s)\sin(s)\big)\\[3pt]
-\big(\sin(s)\cos(3s) + 3\sin(3s)\cos(s)\big)
\end{bmatrix}, \\[4pt]
\kappa(s)
&=
\frac{|\,8\cos(3s)\sin(3s) + 3\cos(2s)\cos(6s)\,|}
     {1.8\big(\cos^2(3s) + 9\sin^2(3s)\big)^{3/2}}.
\end{align*}
\end{example}
\subsubsection{Pose Stabilization for Targeted Inspection}

Let $\gamma_i^\star\ = \gamma(\theta_i^\star)$ denote the assigned vertex of the target formation and $\psi_i^\star$ a desired orientation. 
In a neighborhood of $(\gamma_i^\star,\psi_i^\star)$, a static pose–stabilizing control is applied:
\begin{align}\label{eq:pose-reg}
    \mathbf{u}_{i,\mathrm{pose}}  = 
\begin{bmatrix}
-k_v v_i - k_p\big((\mathbf{p}_i-\gamma_i^\star)^\top \mathbf{h}_i\big)\\[3pt]
-k_\psi(\psi_i-\psi_i^\star)\\[3pt]
-k_z v_{z,i}
\end{bmatrix},
\end{align}

where $\mathbf{h}_i = [\cos\psi_i, \sin\psi_i]^\top$. 
This feedback law exponentially damps the velocities, driving the agent’s pose toward $(\gamma_i^\star,\psi_i^\star)$.

To achieve a seamless transition between path following and pose stabilization, a continuously differentiable blending map
\begin{equation}\label{eq:sigma-blend}
\sigma_i(\mathrm{revs}_i, d_i) = 
\beta\!\left( \frac{\mathrm{revs}_i}{\mathrm{revs}^\star} \right)
\!\cdot\!
\left( 1 - \beta\!\left( \frac{d_i}{d_{\mathrm{sw}}} \right) \right),
\end{equation}
is defined as a function of the number of completed path revolutions, $\mathrm{revs}_i$, and the Euclidean distance to the target vertex, $d_i = \|\mathbf{p}_i - \gamma_i^\star\|$. 
Here, $\mathrm{revs}^\star$ and $d_{\mathrm{sw}}$ denote, respectively, the desired number of sweeping revolutions and a switching radius. 
The sigmoid $\beta(\xi) = 3\xi^2 - 2\xi^3$, $\xi \in [0,1]$, provides smooth activation.
The first factor in~\eqref{eq:sigma-blend} corresponds to sweeping mission completion, while the second  strengthens the pose controller near $\gamma_i^\star$. 

The resulting \textit{nominal} agent control is a convex combination of the path–following and pose–stabilizing inputs:
\begin{equation}\label{eq:nominal-blend}
\mathbf{u}_{i,\mathrm{nom}} = (1-\sigma_i)\,\mathbf{u}_{i,\mathrm{TFL}} + \sigma_i\,\mathbf{u}_{i,\mathrm{pose}}.
\end{equation}

\subsubsection{Distributed Collision Avoidance}

Let $\mathcal{N}_i$ denote the set of neighboring agents within a sensing radius of agent $i$. 
Each neighbor $j\in\mathcal{N}_i$ exerts a repulsive potential
\begin{equation}
\psi_{ij} = k_{\mathrm{avoid}}\!\left(\frac{1}{r_{ij}} - \frac{1}{d_{\mathrm{ao}}}\right)\!\frac{1}{r_{ij}^2}, 
\qquad r_{ij} = \|\mathbf{p}_i - \mathbf{p}_j\| < d_{\mathrm{ao}},
\end{equation}
where $d_{\mathrm{ao}}$ is the activation range and $k_{\mathrm{avoid}} > 0$. 
The total repulsive force acting on agent $i$ reads
\begin{align}
\mathbf{F}_i^{\mathrm{rep}} &= \alpha_i^{\mathrm{duty}} 
\sum_{j\in\mathcal{N}_i} \psi_{ij} (\mathbf{p}_i - \mathbf{p}_j),\\
\alpha_i^{\mathrm{duty}} &= 
\beta\!\left(\frac{\sigma_{\mathrm{accept}}-\sigma_i}{\delta_{\sigma}}\right), \label{eq:duty}
\end{align}
where $\alpha_i^{\mathrm{duty}}$ gradually deactivates repulsion as $\sigma_i\!\to\!1$, preventing agents already settled at their targets from reacting to others.

The corresponding avoidance feedback law along $\mathbf{F}_i^{\mathrm{rep}}$:
\begin{equation}\label{eq:avoidance}
\mathbf{u}_{i,\mathrm{avoid}} = 
\begin{bmatrix}
k_{v,\mathrm{a}} (v_{\mathrm{des}} - v_i) \\[3pt]
k_{\omega,\mathrm{a}} (\psi_{\mathrm{des}} - \psi_i) \\[3pt]
-k_{z,\mathrm{a}} v_{z,i}
\end{bmatrix},
\end{equation}
with desired direction and speed
\[
\psi_{\mathrm{des}} = \mathrm{atan2}((\mathbf{F}_i^{\mathrm{rep}})_y, (\mathbf{F}_i^{\mathrm{rep}})_x), 
\, \,
v_{\mathrm{des}} = v_{\mathrm{max},i}\cos(\psi_{\mathrm{des}} - \psi_i).
\]

A blending factor $\alpha_i$ regulates the influence of avoidance versus nominal control:
\begin{equation}\label{eq:alpha-scale}
\alpha_i = \max_{j\in\mathcal{N}_i} 
\left\{ \alpha_i^{\mathrm{duty}} \cdot 
\beta\!\left(\frac{d_{\mathrm{ao}}-r_{ij}}{d_{\mathrm{ao}}-d_{\mathrm{safe}}}\right)
\right\},
\end{equation}
where $d_{\mathrm{safe}}$ is the minimum allowable separation. 
The final distributed control law is therefore
\begin{equation}\label{eq:final-control}
\mathbf{u}_i = (1-\alpha_i)\,\mathbf{u}_{i,\mathrm{nom}} + \alpha_i\,\mathbf{u}_{i,\mathrm{avoid}}.
\end{equation}

\begin{remark}
Equations~\eqref{eq:avoidance}–\eqref{eq:final-control} depend only on local neighbor states, ensuring decentralized implementation. 
When two agents approach, the one with smaller $\sigma_i$ (farther from its vertex) yields, since its $\alpha_i$ remains positive. 
Agents already stabilized at their vertices are unaffected, preserving the desired formation geometry.
\end{remark}


Some comments are in order. When $\alpha_i = 0$, the nominal controller~\eqref{eq:nominal-blend}
smoothly blends the TFL law
$\mathbf{u}_{i,\mathrm{TFL}}$ with the pose regulator
$\mathbf{u}_{i,\mathrm{pose}}$.  
During the sweeping phase, the TFL component renders the
path-following manifold
$\mathcal{M}_i = \{(\mathbf{p}_i, v_i, z_i, v_{z,i}) :
\mathbf{y}_{i,\perp} = \dot{\mathbf{y}}_{i,\perp} = 0\}$
invariant and attractive, and the motion of each agent follows the
lifted dynamics along the desired path.  
As the number of completed revolutions approaches the target and the
distance to the assigned vertex $\gamma(\theta_i^\star)$ becomes small,
the blending map gradually activates the pose–stabilization dynamics.
This smooth transition relaxes path invariance without discontinuity:
although $\mathbf{u}_{i,\mathrm{pose}}$ may drive the agent slightly away
from the curve, its PD structure ensures bounded deviation and motion
in the vicinity of $\Gamma$, ensuring \textit{practical invariance}.

When $\alpha_i > 0$, the avoidance term $\mathbf{u}_{i,\mathrm{avoid}}$ generates a repulsive vector field that ensures pairwise distances satisfy $r_{ij}(t) \ge d_{\mathrm{safe}}$ for all $t \ge 0$, provided $r_{ij}(0) > d_{\mathrm{safe}}$.  
The asymmetric duty factor $\alpha_i^{\mathrm{duty}}$ guarantees that as agent $i$ approaches its target $\gamma(\theta_i^\star)$, $\sigma_i \to 1$ and thus $\alpha_i \to 0$, ensuring that avoidance deactivates smoothly and the nominal control becomes dominant again.  
This design precludes spurious avoidance equilibria and preserves convergence toward the desired formation.

\begin{proposition}
Given an admissible curve $\gamma(s)$ and a desired rigid formation 
$P(\theta^\star) \subset \Gamma$, 
assume each agent $i$ is assigned a unique target $\gamma(\theta_i^\star)$ and that the formation is non-degenerate, i.e., 
$\min_{i \neq j} \|\gamma(\theta_i^\star) - \gamma(\theta_j^\star)\| > d_{\mathrm{ao}}$.  
Then, there exist controller gains and blending parameters in~\eqref{eq:final-control} solving Problem \ref{problem}
\end{proposition}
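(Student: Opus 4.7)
The plan is to establish the three items of Problem~\ref{problem} sequentially, exploiting the cascade structure induced by the blending maps $\sigma_i$ and $\alpha_i$. Because $\beta(\cdot)$, the feedback components~\eqref{eq:tfl-feedback}, \eqref{eq:pose-reg}, \eqref{eq:avoidance}, and the duty factor~\eqref{eq:duty} are all continuously differentiable in the closed-loop states (away from the singular set $v_i=0$ of $\mathbf{D}(\mathbf{x}_L)$, which the TFL tracking itself keeps bounded away), the overall vector field~\eqref{eq:final-control} is continuous. Standard Lyapunov and invariant-set arguments then apply phase by phase, and gains will be chosen in order of decreasing time scale: first the TFL transverse rates, then the pose regulator, and finally the avoidance gains.

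I would first handle path following. Initially $\sigma_i \approx 0$ (no revolutions completed and $d_i$ large) and, by the non-overlap assumption on $\gamma(\theta_i^\star)$, also $\alpha_i = 0$ along typical initial conditions, so $\mathbf{u}_i \approx \mathbf{u}_{i,\mathrm{TFL}}$. Since~\eqref{eq:tfl-feedback} fully feedback-linearizes the $(\mathbf{y}_\perp,\dot{\mathbf{y}}_\perp)$ dynamics on the lifted manifold~\eqref{eq:lifted-embedding}, a PD choice of $\mathbf{v}$ with gains $K_p,K_d$ sufficiently large compared with the revolution-counting rate $\mathrm{revs}^\star$ drives the transverse error exponentially into an $\varepsilon$-tube of $\mathcal{Z}$ before the sweeping budget is spent, yielding $\|\mathbf{p}_i(t)\|_\Gamma \to 0$ in the practical sense stated in Problem~\ref{problem}.

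Second, I would address formation stabilization. Once $\mathrm{revs}_i \ge \mathrm{revs}^\star$ and agent $i$ enters the switching ball $\{d_i < d_\mathrm{sw}\}$, $\sigma_i \to 1$ and the effective control becomes~\eqref{eq:pose-reg}. Linearizing the extended unicycle~\eqref{eq:agent-model} around $(\gamma_i^\star,\psi_i^\star,0)$ with the virtual pair $(z_i,v_{z,i})$ damped to zero yields a block-diagonal Hurwitz Jacobian for any positive $(k_p,k_v,k_\psi,k_z)$, hence local exponential stability of $(\gamma_i^\star,\psi_i^\star)$. Choosing $d_\mathrm{sw}$ strictly less than the estimated domain of attraction, and exploiting the boundedness of $\mathbf{u}_{i,\mathrm{TFL}}$ in a neighborhood of $\gamma_i^\star$, controls the transverse deviation during the smooth blend in~\eqref{eq:nominal-blend} and establishes the practical-invariance statement needed for item (2).

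Third, I would show that the collision-avoidance clause is compatible with the previous two. For any pair with $r_{ij}(0) > d_\mathrm{safe}$, the potential $\psi_{ij}$ blows up as $r_{ij}\to d_\mathrm{safe}^+$, so $V_i = \sum_{j\in\mathcal{N}_i}\psi_{ij}$ serves as a barrier certificate: selecting $k_{v,\mathrm{a}},k_{\omega,\mathrm{a}}$ so that $\mathbf{u}_{i,\mathrm{avoid}}$ dominates $\mathbf{u}_{i,\mathrm{nom}}$ whenever $r_{ij}$ is close to $d_\mathrm{safe}$ renders $\{r_{ij}\ge d_\mathrm{safe}\}$ forward invariant. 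The non-degeneracy assumption $\min_{i\neq j}\|\gamma_i^\star-\gamma_j^\star\| > d_\mathrm{ao}$ guarantees that the avoidance field vanishes in a neighborhood of the target configuration, so items (2) and (3) are geometrically compatible. The main obstacle, and the step I would treat most carefully, is the possibility of spurious equilibria of~\eqref{eq:final-control} produced by mutual avoidance during the transition window where several $\sigma_k$ are close to $\sigma_\mathrm{accept}$: the asymmetric duty factor~\eqref{eq:duty} is the key device here, since whenever two agents interact at most one of $\alpha_i^\mathrm{duty},\alpha_j^\mathrm{duty}$ saturates to one while the other is driven to zero, breaking the symmetry that could trap the pair. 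A LaSalle-type argument on the combined Lyapunov function $V = \sum_i \tfrac{1}{2}\|\mathbf{p}_i-\gamma_i^\star\|^2 + V_i$, restricted to the compact sublevel set produced by the barrier, then shows that the only invariant configurations are the assigned vertices, closing the argument.
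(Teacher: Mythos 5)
Your overall phase-by-phase structure is sensible and close in spirit to the paper's sketch, but three concrete steps would fail as written. The most serious is your claim that linearizing the closed loop under the pose law~\eqref{eq:pose-reg} at $(\gamma_i^\star,\psi_i^\star,0)$ yields a Hurwitz Jacobian for any positive gains. It does not: taking $\psi_i^\star=0$ without loss of generality, the linearization reads $\delta\dot x=\delta v$, $\delta\dot y=0$, $\delta\dot\psi=-k_\psi\,\delta\psi$, $\delta\dot v=-k_p\,\delta x-k_v\,\delta v$, so the lateral position error carries a zero eigenvalue; indeed the closed loop under~\eqref{eq:pose-reg} has a continuum of equilibria $\{v_i=0,\ \psi_i=\psi_i^\star,\ (\mathbf{p}_i-\gamma_i^\star)^\top\mathbf{h}_i(\psi_i^\star)=0\}$, and by Brockett's obstruction no continuous static feedback of this kind can render the full pose exponentially stable. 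Hence item (2) of Problem~\ref{problem} does not follow from your local analysis: it gives convergence of the heading, the speed, and the along-heading position error only, and the domain-of-attraction argument you build on top of it inherits the gap. (The paper's own sketch glosses the same point by postulating an exponentially decaying $V_{\mathrm{pose}}$ ``by construction,'' but you made a specific, checkable claim that is false.)

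Two further steps need repair. First, the potential $\psi_{ij}$ is finite at $r_{ij}=d_{\mathrm{safe}}>0$ and diverges only as $r_{ij}\to 0$, so it is not a barrier for the set $\{r_{ij}\ge d_{\mathrm{safe}}\}$ as you assert; moreover~\eqref{eq:avoidance} is a heading/speed regulation law, not the gradient flow of $\sum_{j}\psi_{ij}$, so declaring $\sum_j \psi_{ij}$ a barrier certificate requires an explicit argument relating $\dot r_{ij}$ to the avoidance gains (note the paper never proves strict forward invariance either, and its own simulations show $r_{ij}$ touching $d_{\mathrm{safe}}$ transiently). Second, your terminal LaSalle function $\sum_i \tfrac12\|\mathbf{p}_i-\gamma_i^\star\|^2+\sum_j\psi_{ij}$ cannot be monotone along trajectories: during the sweeping phase each $\|\mathbf{p}_i-\gamma_i^\star\|$ necessarily grows and shrinks while the agent completes $\mathrm{revs}^\star$ revolutions, so the invariance-principle step can only be invoked after the blend has activated — which is exactly where the paper instead uses the blended candidate $V_i=(1-\sigma_i)V_{\perp}+\sigma_i V_{\mathrm{pose}}$, bounds the cross term $\dot\sigma_i\,(V_{\mathrm{pose}}-V_{\perp})$ via $|\dot\sigma_i|\le L\|\dot{\mathbf{x}}_i\|$ and a large $\mathrm{revs}^\star$, and then kills the avoidance term exactly through $\beta(\xi)=0$ for $\xi\le 0$ once $\sigma_i>\sigma_{\mathrm{accept}}$, rather than through your (also legitimate) use of the non-degeneracy assumption $\min_{i\neq j}\|\gamma(\theta_i^\star)-\gamma(\theta_j^\star)\|>d_{\mathrm{ao}}$.
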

\begin{proof}[Sketch]
Let $V_{\perp}, V_{\mathrm{pose}}$ be Lyapunov functions for the TFL and pose subsystems, satisfying $\dot{V}_k \le -c_k V_k$. These exists by construction. Consider the candidate $V_i=(1-\sigma_i)V_{\perp}+\sigma_i V_{\mathrm{pose}}$. Its time derivative is
\[
\dot V_i=(1-\sigma_i)\dot V_{\perp}+\sigma_i\dot V_{\mathrm{pose}}
      +\dot\sigma_i\,(V_{\mathrm{pose}}-V_{\perp}).
\]
Differentiating (20) yields $\dot{\sigma}_i = \frac{\partial \sigma_i}{\partial \text{revs}}\dot{\text{revs}}_i + \frac{\partial \sigma_i}{\partial d_i} \dot{d}_i$. Since $\dot{\text{revs}}_i \propto v_i$ and $\dot{d}_i \le \|\dot{\mathbf{p}}_i\|$, there exists $L>0$ such that $|\dot{\sigma}_i| \le L \|\dot{\mathbf{x}}_i\|$.
By choosing $\text{revs}^\star$ large, $V_{\perp} \to 0$ before $\sigma_i$ activates. Thus, the negative definite terms $(1-\sigma_i)\dot V_{\perp} + \sigma_i\dot V_{\mathrm{pose}}$ dominate the perturbation $\dot{\sigma}_i V_{\mathrm{pose}}$ near the target.

Regarding collision avoidance, the control is $\mathbf{u}_i = \mathbf{u}_{\text{nom}} + \alpha_i \mathbf{u}_{\text{avoid}}$. As agent $i$ converges to $\gamma(\theta_i^\star)$, $d_i \to 0$ implies $\sigma_i \to 1$.
Specifically, when $\sigma_i(t) > \sigma_{\text{accept}}$ the numerator in (\ref{eq:duty}) becomes negative: $(\sigma_{accept} - \sigma_i)/\delta_\sigma < 0$.
By definition of $\beta(\xi)$, $\xi \le 0 \implies \beta(\xi)=0$.
Consequently, $\alpha_i^{\text{duty}} = 0$ identically in a neighborhood of the target. The avoidance term vanishes ($\alpha_i = 0$), and the system recovers the asymptotic stability of the nominal controller.
\end{proof}

\section{Simulations}
A team of four agents ($N=4$) is tasked with sweeping, then forming a rigid square on two geometrically challenging planar curves: the \emph{Deltoid}, which features three cusps, and the \emph{Lissajous curve} (3:2 frequency ratio) with a self-intersection. We assume, after sufficient sweeping that a target of inspection is found. In the deltoid case we set $c_{\text{target}}$ to the origin, while in the Lissajous case, a slightly off-center inspection target is given $c_{\text{target}} = (0.5, \ 0.5)$. The simulations demonstrate the performance of the proposed controller in a typical sweep and focused-inspection scenario. For each curve, the agents determine the desired rigid formation via Algorithm~\ref{alg:multistart}. Controller parameters are set as follows: 
$(K_p, K_d) = (\mathrm{diag}(15,15,3), \mathrm{diag}(10,10,3))$ for the lifted TFL law~\eqref{eq:tfl-feedback}, and $(k_p, k_v, k_\psi)=(3.0, 4.0, 5.0)$ for the pose regulator~\eqref{eq:pose-reg}. 
The blending function~\eqref{eq:sigma-blend} uses $\mathrm{revs^\star}=1.0$ revolutions and $d_{\mathrm{sw}}=15\%$ of the curve's characteristic scale. 
Collision avoidance activates at distance $d_{\mathrm{avoid}}=12\%$ of the curve scale, with a hard safety radius $d_{\mathrm{safe}}=6\%$. 
Avoidance gain is set to $\kappa_{\mathrm{avoid}}=0.8$ with directional modulation reducing repulsion to $30\%$ for co-directional motion.
Initial positions are randomly distributed near the curve with small angular perturbations from a fixed seed.

Figure~\ref{fig:sim_snapshots} shows representative trajectories at three key phases.  
On the Deltoid (top row), agents are initially scattered near their random starting configurations (a), converge to the curve and begin coordinated path following by $t=20$s (b), and achieve the final square formation by $t=100$s (c), with all $\sigma_i\!\approx\!1$ indicating full pose stabilization.  
The Lissajous curve (bottom row) demonstrates similar convergence behavior despite the non-defult targeted inspection center. Notably, agents successfully navigate through the curve's self-intersections without collision, validating the directional avoidance mechanism.

\begin{figure*}[t!]
    \centering
    \includegraphics[width=0.8\linewidth]{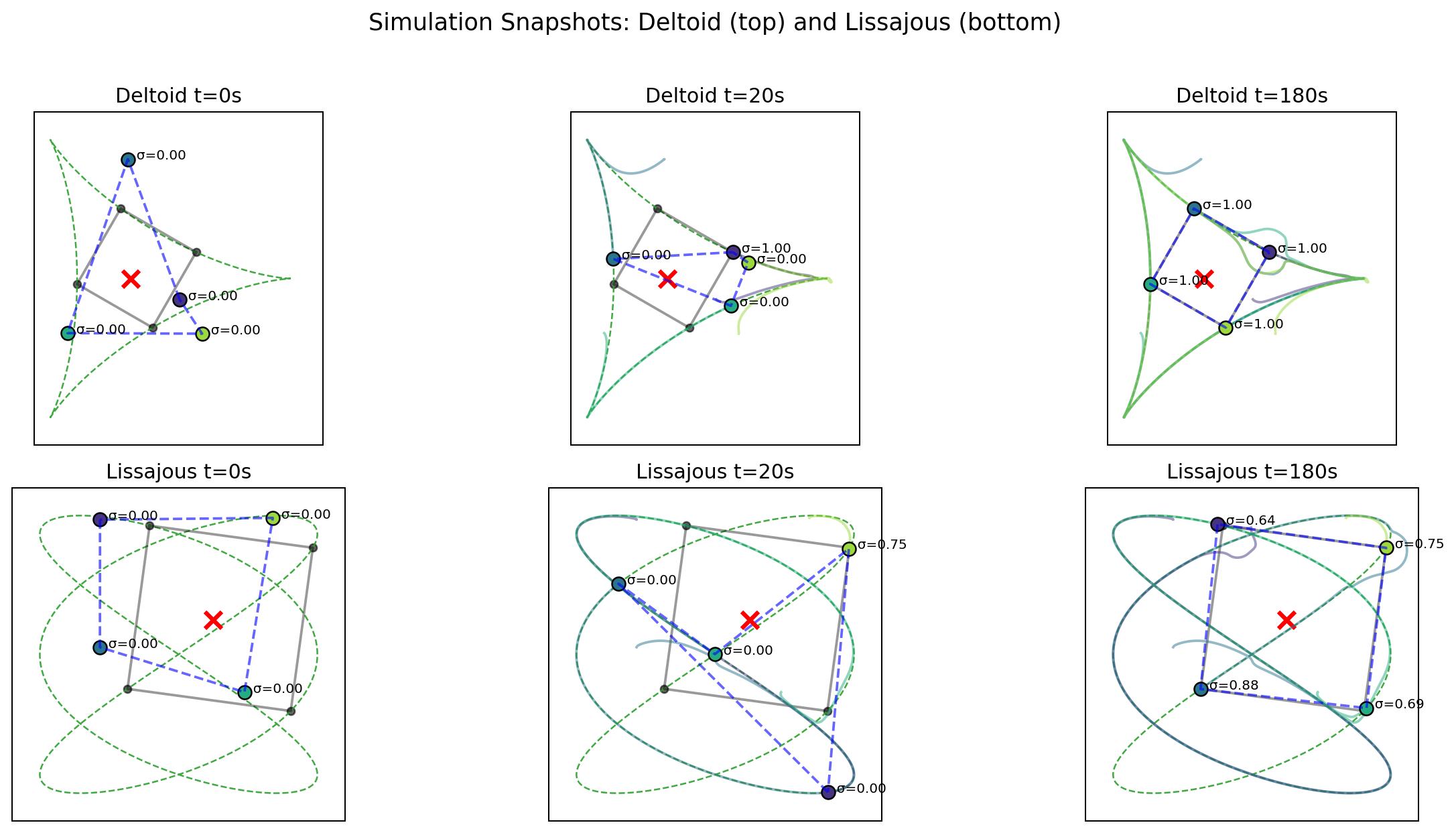}
    \caption{Snapshots. Each panel shows agent positions (colored dots), historical trajectories (fading trails), target formation vertices (black circles), and blending parameters $\sigma_i$. The  final inspection target is shown in red.}
    \label{fig:sim_snapshots}
\end{figure*}

Fig.~\ref{fig:sim_plots} shows performances on the Lissajous curve. 
Panel (a) shows the minimum inter-agent distance throughout the simulation. While the distance remains above $d_{\mathrm{safe}}$ for the majority of the mission, with falling to the minimum safe distance for brief transients, while (b) curve adherence (mean projection error) remains small showcasing nominal invariance apart from when avoidance or complete pose regulation transitions occur. Individual agent traces (faint lines) reveal asynchronous convergence depending on initial conditions and avoidance triggers.  

\begin{figure}[!h]
    \centering
    \includegraphics[width=\linewidth]{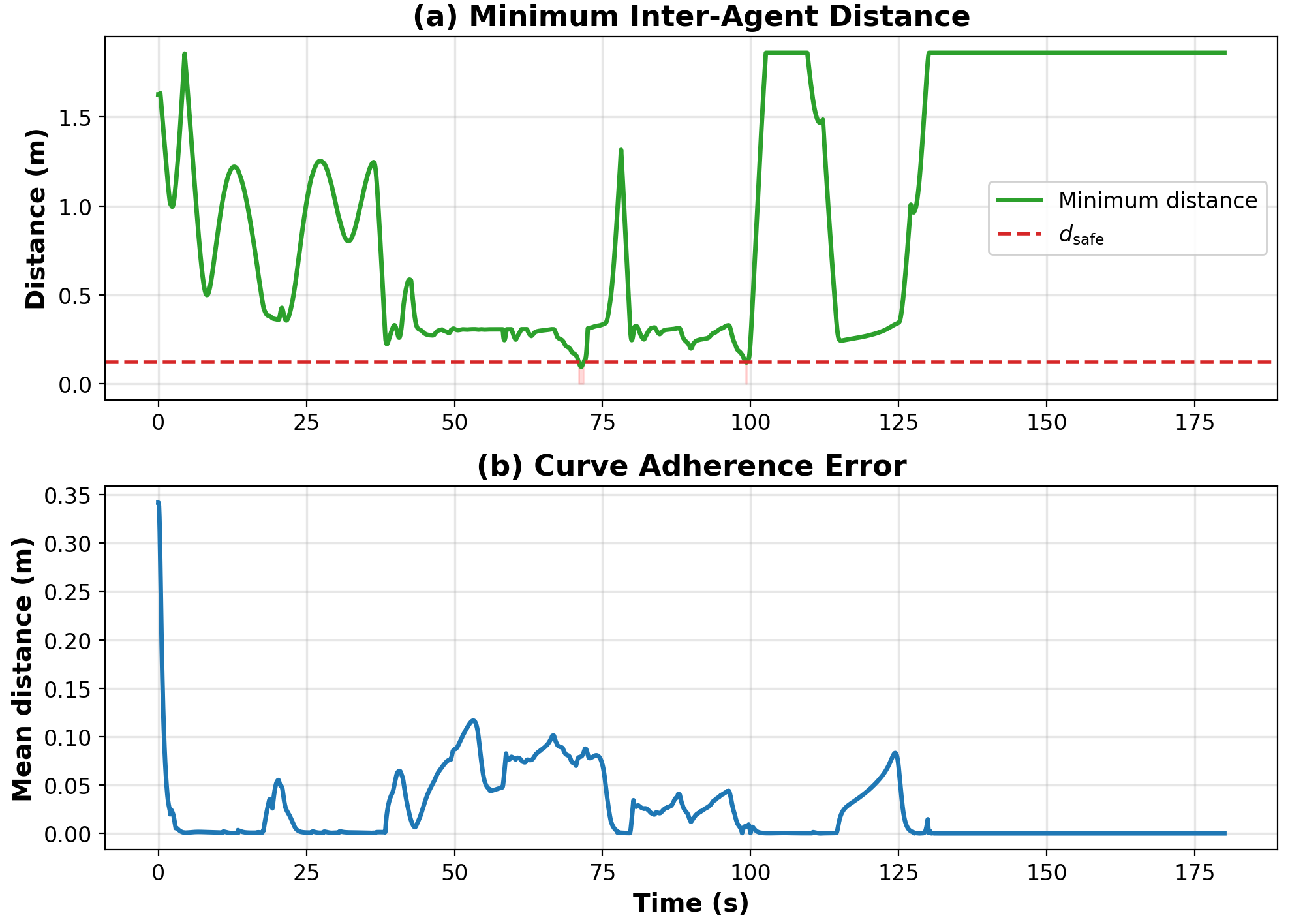}
    \caption{Performance metrics in the Lissajous case. (a) Minimum inter-agent distance
with safety threshold. (b) Curve adherence error during path-following phase.}
    \label{fig:sim_plots}
\end{figure}

Several implementation aspects warrant comments. The blending function~\eqref{eq:sigma-blend} interpolates between product (enforcing both revolution and distance constraints) and minimum (preventing deadlock) based on constraint divergence. This resolves the deadlock issue where agents completing sufficient revolutions but displaced by avoidance maneuvers would remain stuck in path-following mode.  Additionally, an \emph{Asymmetric avoidance} mechanism is included. Agents with $\sigma_i\!>\!0.85$ reduce their avoidance radius from $d_{\mathrm{avoid}}$ to $1.5 d_{\mathrm{safe}}$, creating a smaller threshold while not blocking approach paths for late-arriving agents. This asymmetric design preserves formation stability while enabling sequential convergence.  
To facilitate this, the repulsion force is modulated by heading difference, with stricter force for head-on encounters. This prevents agents from impeding each other's progress while maintaining collision safety.  All these aspects and further simulations can be found in the supplementary material.

\section{CONCLUDING REMARKS}

We presented a unified framework for path following and stabilization of rigid formations
on general planar~$C^1$ curves, including those with cusps and
self–intersections. A randomized multi–start Gauss–Newton algorithm
was introduced to compute admissible inscribed polygonal
formations, while a lifted transverse–feedback–linearization
controller was developed to ensure smooth convergence to a desired
path. This controller is continuously blended with a pose–stabilizing
mechanism, enabling convergence to assigned formation vertices while
preserving inter–agent safety through distributed avoidance.

Future, and currently ongoing work will focus on experimental validation using wheeled
robots, and on extending the proposed framework to spatial curves
in~$\mathbb{R}^3$ through experimental studies with aerial vehicles.

\balance
\bibliography{ifacconf}

\end{document}